\newtheorem{mytheorem}{Theorem}[section]
\newcommand{\squishlist}{
\begin{list}{{{\small{$\bullet$}}}}
{\setlength{\itemsep}{3pt}      \setlength{\parsep}{1pt}
\setlength{\topsep}{1pt}       \setlength{\partopsep}{0pt}
\setlength{\leftmargin}{1em} \setlength{\labelwidth}{1em}
\setlength{\labelsep}{0.5em} } }
\newcommand{\squishend}{  \end{list}  }
\newcommand{\com}[1]{\textbf{\color{red}(COMMENT: #1)}} 
\newcommand{\clar}[1]{\textbf{\color{green}(NEED CLARIFICATION: #1)}}
\newcommand{\response}[1]{\textbf{\color{magenta}(RESPONSE: #1)}} 
\newcommand{\com}[1]{}
\newcommand{\clar}[1]{}
\newcommand{\response}[1]{}
\begin{document}

\title{Machine Truth Serum}

\author{Tianyi Luo \and
Yang Liu
}
\authorrunning{Tianyi et al.}

\institute{UC Santa Cruz, California, USA \\
\email{\{tluo6,yangliu\}@ucsc.edu}}
\maketitle           
\begin{abstract}
\emph{Wisdom of the crowd} \cite{surowiecki2005wisdom} revealed a striking fact that the majority answer from a crowd is often more accurate than any individual expert. We observed the same story in machine learning - ensemble methods \cite{dietterich2000ensemble} leverage this idea to combine multiple learning algorithms to obtain better classification performance. Among many popular examples is the celebrated Random Forest \cite{ho1995random}, which applies the majority voting rule in aggregating different decision trees to make the final prediction. Nonetheless, these aggregation rules would fail when the majority is more likely to be wrong. In this paper, we extend the idea proposed in \emph{Bayesian Truth Serum}  \cite{prelec2004bayesian} that ``a surprisingly more popular answer is more likely the true answer'' to classification problems. The challenge for us is to define or detect when an answer should be considered as being ``surprising''. We present two machine learning aided methods which aim to reveal the truth when it is minority instead of majority who has the true answer. Our experiments over real-world datasets show that better classification performance can be obtained compared to always trusting the majority voting. Our proposed methods also outperform popular ensemble algorithms. Our approach can be generically applied as a subroutine in ensemble methods to replace majority voting rule. 

\end{abstract}

\section{Introduction}

Wisdom of the crowd harnesses the power of aggregated opinion of a diverse group rather than a few individuals. Though initially proposed for mainly aggregating human judgements, this idea has been successfully implemented in the context of machine learning. In particular, ensemble learning was proposed and studied to improve prediction performance by combining several learning models to obtain better results compared to a single one \cite{dietterich2000ensemble}. The developed ensemble techniques have shown consistent benefits in real-world machine learning applications, evidenced by the Netflix Competition \cite{bennett2007netflix} and Kaggle competition. Popular ensemble methods include Boosting (e.g., AdaBoost \cite{freund1997decision}), Bootstrap aggregating (bagging), Stacking \cite{bishop2006pattern},  and Random Forest \cite{ho1995random}. 

The most popular, as well as simple, way to perform aggregation is via majority voting rule. The classical example is Random Forest, which outputs the majority answer from multiple trained decision trees. Inference methods \cite{raykar2010learning}\cite{zhang2014spectral}\cite{liu2012variational}\cite{zhou2012learning}\cite{zhou2014aggregating} have been applied to perform smarter aggregation that aims to outperform majority-voted answers. 
These methods often leverage homogeneous assumption of certain hidden models over a large number of data points in order to perform joint inference. 

Nonetheless, all above methods rely on the assumption that the majority answer is more likely to the correct - this is also true for the more sophisticated inference models, as the inferences will mostly likely initiate based on majority-voted answers (when the algorithm has no prior information). While enjoying this assumption that majority is tending to be correct, this claim is questionable in settings where special knowledge is needed to infer the truth, but it is owned by few individuals when they are not widely shared \cite{chen2004eliminating}\cite{simmons2010intuitive}\cite{prelec2017solution}. Echoing to the above problem of aggregating human judgements, we face similar challenge when aggregating classifiers' predictions in machine learning. For example, we have a deep learning \cite{goodfellow2016deep} classification model which performs the best among multiple models when used in the ensemble method. For some data point, the classification result of this deep learning model may be the correct minority. In this situation, applying majority voting leads to wrong answers. 

We aim to complement the literature via studying whether we can aggregate classifiers better than majority voting even when majority opinion is wrong. We also target a method that can operate over each data point separately without assuming homogeneous assumptions across a massive dataset. 

The question sounds unlikely to resolve at a first look, but we are inspired by the seminal work \emph{Bayesian Truth Serum} (BTS) \cite{prelec2004bayesian}\cite{prelec2017solution} which approached this question in the setting of incentivizing and aggregating truthful human judgements. The core idea behind BTS is simple and elegant: the correctness of an answer does not rely on its popularity, but rather whether it is ``surprisingly'' popular or not - here an answer that has a higher posterior (computed from reports of the crowds) than its prior is taken as being ``surprisingly'' popular, and should be considered as the true answer. This argument has a very intuitive Bayesian reasoning: the signal that improves over its prior is more likely to be informative. \cite{prelec2017solution} also argued that via eliciting a peer prediction information, which is defined as the fraction of ``how many other people would agree with you'' from each agent, he will be able to construct an informative prior to compare with the majority vote posterior aggregation. BTS operates over each single question separately, without seeing a large number of similar tasks (in order to leverage a certain homogeneity assumption).

In this paper, we make a connection between these two seemingly irrelevant topics, and extend the key idea in \emph{Bayesian Truth Serum} to aggregating classifiers' predictions. The challenge is that we would not be able to elicit a belief from a classifier on ``how many other classifiers would agree with themselves'', which renders the task of computing the prior difficult. We proposed two machine learning aided algorithms to mimic the procedure of reporting the peer prediction information, which we jointly name as \emph{Machine Truth Serum} (MTS). We firstly propose Heuristic Machine Truth Serum (HMTS). In HMTS, we pair each baseline classifier (an agent) with a regressor model, which is trained to predict the peer prediction information using a processed training dataset. With the predictions from the regressors, we will be able to apply the idea of BTS to decide on whether adopting the minority as the answer via comparing the prior (computed using the regressor) and the posterior for each label. Then we proposed Discriminative Machine Truth Serum (DMTS). In DMTS, we directly train one classifier to predict whether adopting the minority as the answer or not. As for the training complexity of our algorithm, the training time of HMTS is linear in the number of label classes because of the training of extra regressors. DMTS will only need to train one additional classifier and both the training and the running time are almost the same as the basic majority voting algorithm. Therefore our proposed methods are very practical to implement and run.

Our contributions summarize as follows: (1) We propose Heuristic Machine Truth Serum (HMTS) and Discriminative Machine Truth Serum (DMTS) to complement ensemble methods, which can detect when minority should be considered the final prediction instead of the majority. (2) Our experiments over 6 binary and 6 multiclass classification real-world datasets reveal promising results of our approach in improving over majority voting. Our proposed methods also outperform popular ensemble algorithms. (3) To pair with our experimental results, we also provide analytical evidences for the correctness of our proposed approaches. (4) Our approaches can be generically applied in ensemble methods to replace simple majority voting rules. 

The rest of the paper is organized as follows. Section \ref{sec:related} introduces some related works. Section 3 reviews preliminaries and BTS. Section 4 introduces our Machine Truth Serum approaches. Section 5 presents our experimental results. Section 6 concludes our paper.

\section{Related Work}\label{sec:related}

Wisdom of the crowd \cite{surowiecki2005wisdom} are often considered as being more accurate than a few elite individuals in applications including decision making of public policy \cite{morgan2014use}, answering the questions on general world knowledge  \cite{singh2002open}, and so on. Typical algorithms for extracting wisdom of the crowd are based on majority voting, and the assumption that the majority opinion is more likely to be correct \cite{surowiecki2005wisdom}. There is another line of machine learning works on proposing inference methods, including Expectation Maximization method \cite{raykar2010learning}\cite{zhang2014spectral}, Variational Inference \cite{liu2012variational}\cite{chen2015statistical}, and Minimax Entropy Inference \cite{zhou2012learning}\cite{zhou2014aggregating} to crowdsourcing settings, aiming to uncover the true labels from the noisy labels provided by non-expert crowdsourcing workers. Most relevant to us, \cite{prelec2004bayesian}\cite{prelec2017solution} proposed a Bayesian Truth Serum method to extract the subjective judgment of minority expert by collecting not only people's judgements but also how many percentage of the population share the same opinion. 

In machine learning, ensemble methods combining multiple learning algorithms usually performs better than any single method \cite{dietterich2000ensemble}.  
Ensemble methods consist of a rich family of algorithms. For instance, AdaBoost \cite{freund1997decision} and Random Forest \cite{ho1995random} are two different and commonly used ones. AdaBoost tries to optimize weighted voting outcomes, while Random Forest train and test using the majority voting rule. But these popular ensemble methods will be wrong when the minority is the correct answer.

In both the setting of aggregating human judgements and classifiers' predictions, most works, except for \cite{prelec2004bayesian}, would fail when the majority opinion is instead likely to be wrong. But BTS only works in the setting of aggregating human judgements by collecting subjective judgment data. Based on the ideas proposed by \cite{prelec2004bayesian} and \cite{prelec2017solution}, we proposed two machine learning aided algorithms to find the correct answer when it is minority instead of majority in the setting of classifiers' predictions. As our proposed methods are machine learning algorithms, they can be trained and the predictions will be made automatically instead of collecting subjective judgment data as the case in \cite{prelec2004bayesian}. 

\section{Preliminary}

In this paper, we consider both binary and multiclass classification problems. Nonetheless, for simplicity of demonstration, our main presentation focuses on binary classification. A multi-class extension of our method is presented in Section \ref{sec:multi}. 

Suppose that we have a training dataset $\mathcal D:=\{(x_i, y_i)\}_{i=1}^{N} $ and a test dataset $\mathcal T:=\{(x_i, y_i)\}_{i=1}^{T} $, where $x_i \in X \subseteq \mathbb{R}^{d} $ is a $d$-dimensional vector. We have $K$ baseline classifiers $\mathcal F:=\{f_1,f_2,...,f_K: X \rightarrow \{0, 1\}\}$ that map each feature vector to a binary classification outcome. Ensemble method such as boosting algorithms can combine $\{f_1,f_2,...,f_K\}$ to get better prediction results than each single one. For instance, Random Forest first applies the bootstrap aggregating to train multiple different decision trees to correct overfitting problems of decision trees. After training, the majority rule will be applied to generate the prediction result. 

The above dependence on the majority voting rule is ubiquitous in ensemble methods. The key assumption of using the majority rule is that the majority is more likely to be correct than random guessing. Denoting as $\textsf{Maj}(\{f_1(x),f_2(x),...,f_K(x)\})$ the majority answer from the $K$ classifiers, formally, most, if not all, methods require that 
\[
\displaystyle P(\textsf{Maj}(\{f_1(x),f_2(x),...,f_K(x)\}) \neq y) < 0.5
\]
Our goal is still to construct a single aggregator $\mathcal A(\{f_1,f_2,...,f_K\})$ that takes the classifiers' predictions on each data point as inputs and generates an accurate aggregated prediction. But we aim to provide instruction to cases where it is possible that 
\[
\displaystyle P(\textsf{Maj}(\{f_1(x),f_2(x),...,f_K(x)\}) \neq y) > 0.5
\]
The challenge is to detect when the minority population has the true answer.

\subsection{Bayesian Truth Serum}

\cite{prelec2004bayesian} considers the following human judgement elicitation problem: There are a set of agents denoted by $\{a_i\}_{i=1}^{K}$. The designer aims to collect subjective judgement from each agent about an unknown event $y \in \{0,1\}$ and aggregate accordingly. Each of the agent $i$ needs to report his own predicted label $l_i \in \{0, 1\}$ for $y$, and the percentage of other agents he believes will agree with him $p_i \in [0, 1]$. We will also call this second belief information as the \emph{peer prediction information}. Denote the belief of agent $i$ as $\mathcal B_i$. $p_i$ is defined as follows:
\[
p_i = \mathbb E_{\mathcal B_i}\biggl (\frac{\sum_{j \neq i}\mathbbm{1}(l_j = l_i)}{K-1}\biggr).
\]

We, as the designer, obtain the prediction labels $\{l_i\}_{i=1}^{K} $ and the percentage information $\{p_i\}_{i=1}^{K} $ from all the agents. The posterior for each label is defined as the actual percentage of this label which can be easily calculated utilizing the prediction results: (for label 1)
\begin{eqnarray}
\label{eq:calprior1}
&& \textsf{Posterior}(1) = \frac{\sum_{i}{\mathbbm{1}(l_i=1)}}{K}
\end{eqnarray}
In \cite{prelec2004bayesian}\cite{prelec2017solution}, Prelec et al. promote the idea of using the average predicted percentage of the responding label as the approximation of the priors: (for label 1). 
\begin{eqnarray}
\label{eq:calprior1}
&& \textsf{Prior}(1) = \frac{\sum_{i = 1}^{K} p_i^{\mathbbm{1}(l_i=1)} \cdot (1 - p_i)^{1-\mathbbm{1}(l_i=1)}}{K} 
\end{eqnarray}
If $\textsf{Posterior}(1)>\textsf{Prior}(1)$, label 1 will be taken as the surprisingly more popular answer, which should be considered as the true answer $\hat{y}$, even though it might be in minority's hands. The same rule is applied to label 0. Formally, if we denote $\hat{y}$ as the aggregated answer:
\begin{equation}  
\label{eq:caldmtslabel}
{\hat{y}} = \left\{  
\begin{array}{{llll}} 1~~~\text{if~$\textsf{Prior}(1)$ < $\textsf{Posterior}(1)$};\\
    0~~~\text{if $\textsf{Prior}(1)$ > $\textsf{Posterior}(1)$}. 
\end{array}
\right.  
\end{equation} 
The rest of the paper will focus on generalizing the above idea to aggregate classifiers' predictions.

\section{Machine Truth Serum}

In this section, we introduce Machine Truth Serum (MTS). Suppose we have access to a set of baseline classifiers. Each classifier can be treated as an agent. We'd like to build a BTS-ish aggregation method to aggregate the classifiers' predictions. The challenge is to compute the priors from the classifiers - machine-trained classifiers do not encode beliefs as human agents do, so we cannot elicit the peer prediction information from them directly. We propose two machine learning aided approaches to perform the generation of this peer prediction information. We firstly introduce two MTS approaches for binary classification and then extend these approaches to multiclass classification case.

\subsection{Heuristic Machine Truth Serum}

We first introduce heuristic machine truth serum (HMTS). The high level idea is to train a regression model for each classifier to predict the percent of the agreement from other classifiers on the prediction of each particular data point. After getting the predicted labels and the predicted peer prediction information of the classifiers, we can again approximate the priors using the predicted peer prediction information for each classifier, compute the average and compare it to posterior. In this part, HMTS for binary classification will be introduced firstly and its multiclass extension is stated in Section 4.3.

Given the training data $\mathcal D =  \{(x_i, y_i)\}_{i=1}^{N} $ and multiple classifiers $\{f_j\}_{j=1}^{K}$, we first try to compute the $j$-th classifier's ``belief'' of the fraction of other classifiers that would ``agree''
 with it. Denote this number as $\bar{y}_{i}^{j}$ for each training sample $(x_i, y_i)$. $\bar{y}_{i}^{j}$ can be computed as follows:
\begin{eqnarray}
\label{eq:calg}
\bar{y}_{i}^{j} = \frac{\sum_{j \neq k}{\mathbbm{1}(f_{j}(x_i)=f_{k}(x_i))}}{K - 1} ,
\end{eqnarray}
By above, we have pre-processed the training data to obtain $\mathcal D^H_j := \{(x_i, \bar{y}_{i}^{j})\}_{i=1}^{N},~j = 1,...,K$, which can serve as the training data to predict the peer prediction information of classifier $j$ (again to recall, peer prediction information is the fraction of other classifiers that classifier $j$ believes would agree with it). We then train peer prediction regression models $\{g_j\}_{j=1}^{K}$ on $\mathcal D^H_j := \{(x_i, \bar{y}_{i}^{j})\}_{i=1}^{N},~j = 1,...,K$ respectively to map $x_i$ to $\bar{y}_i^j$. We consider different class labels and will first train two regression models: $g_{j,0}$ and $g_{j,1}$ are two belief regression models of classifier $j$ and trained on the examples whose predicted labels are $0$s ($\mathcal D^H_{j,0} := \{(x_i, \bar{y}_{i}^{j}): f_j(x_i)=0\}_{i=1}^{N}$) and $1$s ($\mathcal D^H_{j,1} := \{(x_i, \bar{y}_{i}^{j}): f_j(x_i)=1\}_{i=1}^{N}$) respectively.

\begin{algorithm}[!ht]
    \caption{Heuristic Machine Truth Serum (Binary classification)}
    \label{alg:hmts}
    \begin{algorithmic}[1]
        \REQUIRE ~~
            \\
            Input:\\
            $\mathcal D = \{(x_{1}, y_{1}), ..., (x_{N}, y_{N}) \}$: training data\\
            $\mathcal T = \{(x_{1}, y_{1}), ..., (x_{T}, y_{T}) \}$: testing data\\
            $\mathcal F = \{f_{1}, ..., f_{K} \}$: classifiers 
        \ENSURE ~~
        \STATE Train $K$ classifiers ($\mathcal F$) on the training data
        \FOR{$j=1$ to $K$}
            \FOR{$i=1$ to $N$}
                \STATE Compute $\bar{y}_{i}^{j}$ according to Eqn.(\ref{eq:calg})
            \ENDFOR
            \STATE Train machine belief $g_{j,0},g_{j,1}$ on training dataset $\mathcal D^H_j:=\{(x_i, \bar{y}_{i}^{j})\}_{i=1}^{N}$.
        \ENDFOR
        \FOR{$t=1$ to $T$}
            \STATE Compute $\textsf{Prior}(x_i,l=1)$ and $\textsf{Posterior}(x_i,l=1)$ according to Eqn.(\ref{eq:calprior}) and Eqn.(\ref{eq:calpost})
            \IF {$\textsf{Prior}(x_i,l=1)$ < $\textsf{Posterior}(x_i,l=1)$}
                \STATE Output ``surprising'' answer 1 as the final prediction.
            \ELSIF{$\textsf{Prior}(x_i,l=1)$ > $\textsf{Posterior}(x_i,l=1)$}
                \STATE Output ``surprising'' answer 0 as the final prediction.
            \ENDIF
        \ENDFOR
    \end{algorithmic}
\end{algorithm}
Then compute the following prior of label $1$ for each $x_i$:
\begin{equation}  
\label{eq:g}
g_{j}(x_i) = \left\{  
\begin{array}{llll} g_{j,1}(x_i)~~~&\text{if~$f_j(x_i)=1$};\\
    1 - g_{j,0}(x_i)~~~&\text{if~$f_j(x_i)=0$}. 
\end{array}
\right.  
\end{equation} 
After obtaining these peer prediction regression models $g_j$s, the prior and posterior of $(x_i, y_i) \in \mathcal T$ in the test dataset are then calculated by,
\begin{align}
\label{eq:calprior}
& \textsf{Prior}(x_i,l=1) :=\frac{\sum_{j}{g_{j}(x_i)}}{K}\\
\label{eq:calpost}
& \textsf{Posterior}(x_i,l=1) := \frac{\sum_{j}{\mathbbm{1}(f_{j}(x_i)=1)}}{K}
\end{align}

If $\textsf{Prior}(x_i,l=1) < \textsf{Posterior}(x_i,l=1)$, the ``surprsing'' answer 1 will be considered as the true answer. The decision rule is similar for label 0. The procedure is illustrated in Algorithm~\ref{alg:hmts}.

\subsection{Discriminative Machine Truth Serum}

The Heuristic Machine Truth Serum above relies on training models to predict the peer prediction information for each classifier (which will be used to compute the priors) and compare them to the posteriors, and then decide on whether to follow the minority opinion or not. We notice the above task of determining whether to follow the minority or not is also a binary classification question. We can therefore utilize a classification model to directly predict for each data point whether the minority should be chosen as the answer or not.

We propose Discriminative Machine Truth Serum (DMTS). Again, DMTS for binary classification will be introduced firstly and its multiclass extension is stated in Section 4.3. With DMTS, a new training dataset $\mathcal D_D := \{x_i, {\hat{y}}_{i}\}_{i=1}^{N}$ about whether considering the minority as the final answer or not is constructed. Each data $\mathcal D_D:=(x_i, {\hat{y}}_{i})$, for $i=1,...,N$, in this new training dataset is calculated as follows: for each $(x_i, y_i) \in \mathcal D$
\begin{equation}  
\label{eq:caldmtslabel}
{\hat{y}}_{i} = \left\{  
\begin{array}{llll} 1~~~\text{if~majority of $\mathcal F$ on $x_i$ is different from the true label};\\
    0~~~\text{if majority of $\mathcal F$ on $x_i$ is same as the true label}. 
\end{array}
\right.  
\end{equation} 

\begin{algorithm}[hb]
    \caption{Discriminative Machine Truth Serum (Binary classification)}
    \label{alg:dmts}
    \begin{algorithmic}[1]
        \REQUIRE ~~
            \\
            Input:\\
            $\mathcal D = \{(x_{1}, y_{1}), ..., (x_{N}, y_{N}) \}$: training data\\
            $\mathcal T = \{(x_{1}, y_{1}), ..., (x_{T}, y_{T}) \}$: testing data
        \ENSURE 
        \FOR{$i=1$ to $N$}
            \STATE Compute ${\hat{y}}_{i}$ according to Eqn.(\ref{eq:caldmtslabel})
        \ENDFOR
        \STATE Train DMTS classifier $\overline{f}$ on the dataset $\{x_i, {\hat{y}}_{i}\}_{i=1}^{N} $
        \FOR{$t=1$ to $T$}
            \STATE Compute the classification result $\overline{y}_t := \overline{f}(x_t)$
            \IF {$\overline{y}_t=0$}
                \STATE Stay with the majority answer.
            \ELSIF {$\overline{y}_t = 1$}
                \STATE Predict with the minority answer.
            \ENDIF
        \ENDFOR
    \end{algorithmic}
\end{algorithm}

Now with above preparation, predicting whether majority is correct or not becomes a standard classification problem on $\mathcal D_D :=\{x_i, {\hat{y}}_{i}\}_{i=1}^{N}$. This is readily solvable by applying standard techniques. In our experiments, we will mainly use a Multi-Layer Perceptron (MLP) \cite{goodfellow2016deep} denoted as $\overline{f}$. $\overline{f}$ is trained on this new training dataset and can directly predict whether we should adopt the minority as the answer or not. $\overline{f}$ does not restrict to MLP and can be other classifiers. We have tried several other methods, such as linear regression, and similar conclusions are obtained. The procedure is further illustrated in Algorithm~\ref{alg:dmts}.

\subsection{Multiclass Extension of HMTS and DMTS}\label{sec:multi}

HMTS and DMTS can be extended to multiclass classification problem with the same ideas by modifying them accordingly. In the multiclass case, $l \in \mathcal C=\{0, 1,..., L\}$ is denoted as the class label of the dataset. Consider HMTS first. For each classifier $j$, we need to consider different class labels of regression models $\{g_{j,l}\}$, where $l \in \mathcal C=\{0, 1,..., L\}$. $g_{j,l}$ is the belief regression model of classifier $j$ and trained on the examples whose predicting labels are $l$s. 

Again compute the following prior for each $x_i$
\begin{equation}  
\label{eq:g}
g^*_{j,l}(x_i) = \left\{  
\begin{array}{llll} g_{j,l}(x_i)~~~& \text{if~$f_j(x_i)=l$};\\
  \bigl (1 - g_{j,f_j(x_i)}(x_i)\bigr)\cdot ratio_{l}~~~ & \text{if~$f_j(x_i) \neq l$},
\end{array}
\right.  
\end{equation} 
where $ratio_{l}=\frac{g_{j,l}(x_i)}{\sum_{c \in \mathcal C: c\neq f_j(x_i)}g_{j,c}(x_i)}$ is defined as the ratio of the  $l$'s belief to the summation of all the other classes' beliefs except for the predicted class utilizing majority rule.

In HMTS, Eqn. (\ref{eq:calprior}) and (\ref{eq:calpost}) can be modified to the following:
\begin{align}
\label{eq:multicalprior}
& \textsf{Prior}(x_i,l=c) := \frac{\sum_{j=1}^K g^*_{j,c}(x_i)}{K}\\
\label{eq:multicalpost}
& \textsf{Posterior}(x_i,l=c) := \frac{\sum_{j=1}^K{\mathbbm{1}(f_{j}(x_i)=c)}}{K}
\end{align}
We then compute all the priors and posteriors of each class label based on Eqn. (\ref{eq:multicalprior}) and (\ref{eq:multicalpost}). It is possible that there exist more than one class labels whose posterior is larger than its prior. We define the set containing all these label classes as 
\[
\mathcal C_{sat}=\{c ~~|~~ \textsf{Prior}(x_i,l=c) < \textsf{Posterior}(x_i,l=c),  c\in \mathcal C \}.
\]

We predict the class label which has the biggest improvement from its prior to posterior:
\[
\text{argmax}_{ c \in \mathcal C_{sat}} 
\bigl |\textsf{Posterior}(x_i,l=c) - \textsf{Prior}(x_i,l=c)\bigr|~.
\]

In DMTS, firstly we need to train a model that decides whether to apply the minority as the final answer which are very similar to the binary case. The difference is that we will then choose the minority answer as the predicted answer instead of using majority if i) it has the most votes in the minority answers and ii) the prediction result of classifier obtained in the training phase is 1 (we should use minority). 

\subsection{Theoretical analysis}

We conduct a formal analysis about the correctness of our proposed algorithms. Not surprisingly, the key ideas of the proofs are adapted from the proof for BTS \cite{prelec2017solution}. For simplicity, we only present the theorems for binary classification. The proofs of multiclass ones are similar to the binary case. The details of proofs are left to the Appendix \ref{sec:appendix_a1}.

To set up for presenting the theorems, we restate our problem: we assume that each $x_{i}$ can take on any value in the discrete set $\{s_{1},...,s_{m}\}$ for the simplicity of proof. In practice, conceptually each feature vector can be represented by an assigned (large-enough) categorical number. One can consider $s_{k}(k = 1, 2,,,m)$ as a code for each feature vector. The proof based on continuous value can be deduced similarly. 

Here we have two worlds $w_{io}$ (o = 0 or 1) of different class labels for any $x_{i}$. One world is actual, the other one is counterfactual. If we say $w_{i1}$ is the actual world for $x_i$, it means the predicting answer of $x_i$ in this world is $1$ and $y=1$ is also the ground truth label of $x_i$. $w_{i0}$ is the counterfactual world and the predicting answer of $x_i$ in this world is $0$. In this paper, we are considering infinite samples. While finite samples is practical setting, it is important to first analyze and conclude some deductions in the infinite sample ideal case.

\begin{theorem}
No algorithm exists for deducting the correct classification answer relying exclusively on feature vector distribution of true class label, $\displaystyle P(s_{k}|y_{o^{*}}),k=1,...,m$ and correctly computed posterior distribution over all possible classification labels given feature vectors, $\displaystyle P(y_{o}|s_{k}),k=1,...,m,o=0,1$ for any $x_{i}$. $o^{*}$ is the true class label.
\end{theorem}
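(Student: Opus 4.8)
The plan is to prove this impossibility by an indistinguishability (coupling) argument: I will construct two legitimate instances of the problem that present \emph{identical} inputs to any candidate algorithm, yet have opposite correct answers. Any algorithm that ``relies exclusively on'' $P(s_k\mid y_{o^*})$ and $P(y_o\mid s_k)$ is, by definition, a fixed function $\mathcal A$ of the pair $\bigl(\,(P(s_k\mid y_{o^*}))_{k=1}^m,\ (P(y_o\mid s_k))_{o\in\{0,1\},\,k=1}^m\,\bigr)$; feeding it the same pair in the two instances forces the same output, so it is wrong on at least one of them.

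First I would fix the ``observable profile'': a feature distribution $q=(q_1,\dots,q_m)$ with every $q_k>0$, and a vector $\pi=(\pi_1,\dots,\pi_m)$ with every $\pi_k\in(0,1)$, where $\pi_k$ is to play the role of $P(y_1\mid s_k)$ (so $1-\pi_k$ plays the role of $P(y_0\mid s_k)$). Then I would build \textbf{Instance~1}, whose actual world has true label $o^*=1$, by exhibiting the unique joint law $P_1$ on $\{s_1,\dots,s_m\}\times\{0,1\}$ with $P_1(s_k\mid y_1)=q_k$ and $P_1(y_1\mid s_k)=\pi_k$ for all $k$: from $P_1(s_k,y_1)=q_k\,P_1(y_1)$ and $P_1(s_k)=q_k\,P_1(y_1)/\pi_k$ one gets $P_1(y_1)=\bigl(\sum_k q_k/\pi_k\bigr)^{-1}$, hence $P_1(s_k,y_0)=q_k\,P_1(y_1)(1-\pi_k)/\pi_k\ge 0$; all entries are nonnegative and sum to one, so $P_1$ is a valid joint law whose correctly computed posteriors are exactly $\pi$. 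Symmetrically I would build \textbf{Instance~2}, whose actual world has true label $o^*=0$, by requiring $P_2(s_k\mid y_0)=q_k$ and $P_2(y_1\mid s_k)=\pi_k$; the same computation yields $P_2(y_0)=\bigl(\sum_k q_k/(1-\pi_k)\bigr)^{-1}$ and a valid joint law $P_2$ (which differs from $P_1$ for generic $q,\pi$, so the two are genuinely distinct instances of the setup).

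The key observation to record is that in Instance~1 the algorithm receives $P_1(s_k\mid y_{o^*})=P_1(s_k\mid y_1)=q_k$ together with posteriors $\pi$, whereas in Instance~2 it receives $P_2(s_k\mid y_{o^*})=P_2(s_k\mid y_0)=q_k$ together with the \emph{same} posteriors $\pi$ -- the two input profiles coincide exactly, even though $o^*=1$ in the first and $o^*=0$ in the second. Therefore $\mathcal A$ outputs one fixed label on this common input and must be incorrect on at least one of the two instances, contradicting correctness; this proves the theorem. I would close with a short remark explaining \emph{why} the argument works: the pair $(q,\pi)$ does not pin down the feature marginal $P(s_k)$, and hence determines neither the vote marginal $P(y_o)=\sum_k P(y_o\mid s_k)P(s_k)$ nor the counterfactual conditional $P(s_k\mid y_{1-o^*})$ -- precisely the informational gap that the BTS-style peer-prediction signal is meant to fill, which is what motivates the Machine Truth Serum constructions of the preceding subsections. (A concrete two-point example, e.g. $q=(\tfrac12,\tfrac12)$, $\pi=(0.9,0.6)$, gives $P_1(y_1)=0.72$ and $P_2(y_1)=0.84$, illustrating that the same $(q,\pi)$ is consistent both with ``majority correct'' and with ``majority wrong''.)

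The main obstacle I anticipate is not the algebra, which is the short computation above, but fixing the formal model precisely enough that the indistinguishability step is airtight: one must state exactly which quantities an ``algorithm relying exclusively on'' the listed data may use -- in particular that it may \emph{not} access $P(s_k)$ or the vote marginal $P(y_o)$ -- and must check that both constructed instances are admissible under the two-worlds (actual/counterfactual) setup described before the theorem, with genuinely ``correctly computed'' posteriors in each. Once the model is pinned down in this way, the rest is the explicit coupling sketched above.
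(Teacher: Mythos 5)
Your proposal is correct and is essentially the paper's own argument: both exploit that the given data leave the feature-vector prior free, and use Bayes inversion (with the same normalization $\bigl(\sum_k q_k/\pi_k\bigr)^{-1}$ for the label marginal) to build a world consistent with the observables in which the other label is true. The only difference is presentational — you phrase it as two explicit indistinguishable instances fed to a fixed function $\mathcal A$, while the paper constructs a single counterfactual world $Q$ for an arbitrarily chosen label — and your version makes the indistinguishability step somewhat more explicit.
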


\begin{theorem}
For any $x_{i}$, the average estimate of the prior prediction for the correct classification answer will be underestimated if not every classifier provides the correct classification prediction .
\end{theorem}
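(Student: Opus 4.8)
The plan is to adapt the ``surprisingly popular'' argument of Prelec et al.\ \cite{prelec2017solution} to the classifier-aggregation setting, working in the infinite-sample regime fixed in the theorem's setup. Fix a test point $x_i$ and, without loss of generality, suppose the true label is $o^{*}=1$. The first step is to reinterpret the perfectly trained belief regressors in the limit: with infinite training data, $g_{j,1}(x_i)$ equals the conditional expectation of the fraction of classifiers predicting $1$, given $f_j(x_i)=1$, and by the complement identity in Eqn.~(\ref{eq:g}) the quantity $1-g_{j,0}(x_i)$ equals that same conditional expectation given $f_j(x_i)=0$. Invoking the two-world model $\{w_{i0},w_{i1}\}$ together with the exchangeability of the classifiers assumed for the analysis, these conditional expectations depend only on the conditioning prediction, so I write them as $\beta_1$ and $\beta_0$. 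Writing $q:=\textsf{Posterior}(x_i,l=1)$ for the actual fraction of classifiers predicting $1$ in the realized world and grouping the $K$ summands of Eqn.~(\ref{eq:calprior}) according to the value of $f_j(x_i)$ yields $\textsf{Prior}(x_i,l=1)=q\,\beta_1+(1-q)\,\beta_0$.

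The heart of the proof is then a short Bayesian comparison. Both $\beta_1$ and $\beta_0$ are mixtures, over the two worlds, of the quantity ``fraction of classifiers predicting $1$ in that world''; this quantity equals $q$ in the realized world $w_{i1}$ and is strictly smaller in the counterfactual world $w_{i0}$ — which is exactly the informativeness assumption underlying the majority rule (and, crucially, it does not require the majority itself to be correct). Consequently $\beta_1\le q$, with equality only when a single vote for $1$ already puts degenerate posterior mass on $w_{i1}$, and $\beta_0<q$ strictly whenever a vote for $0$ leaves positive posterior mass on $w_{i0}$. Now suppose that \emph{not} every classifier predicts the correct label: then at least one classifier votes $0$, so $q<1$ and the weight $1-q$ on $\beta_0$ is positive, and substituting $\beta_1\le q$, $\beta_0<q$ into $\textsf{Prior}(x_i,l=1)=q\beta_1+(1-q)\beta_0$ gives $\textsf{Prior}(x_i,l=1)<q=\textsf{Posterior}(x_i,l=1)$; that is, the averaged prior estimate for the correct answer strictly underestimates its realized posterior. (If instead every classifier is correct then $q=1$ and $\beta_1=1$, so prior and posterior coincide — which is precisely why the hypothesis of the theorem is needed.) The multiclass version follows the same template using the normalized beliefs $g^{*}_{j,l}$ of Eqn.~(\ref{eq:g}).

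I expect the main obstacle to be the bookkeeping of the idealized model rather than the inequality itself. Two points need care: (i) justifying that a perfectly fit regressor actually recovers the conditional expectation $\mathbb{E}[\bar{y}_i^{\,j}\mid f_j(x_i)]$ in the infinite-data limit and that the per-classifier beliefs can legitimately be collapsed to the two scalars $\beta_0,\beta_1$ — this is where the exchangeability/common-prior assumptions imported from BTS are doing the work, and where genuinely heterogeneous classifiers would force a more delicate averaging argument; and (ii) pinning down the precise form of the ``two worlds'' prior so that the monotonicity ``the fraction predicting $1$ is larger in $w_{i1}$ than in $w_{i0}$'' becomes a consequence rather than a bare hypothesis, presumably by deriving it from the standing assumption that the classifiers' votes are informative about $y$. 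Once these are settled, the computation above is just the binary specialization of the BTS lemma, and it dovetails with the preceding impossibility theorem: the prior/posterior pair alone cannot identify the truth, but the peer-prediction signal encoded in the $g_j$'s supplies exactly the missing comparison.
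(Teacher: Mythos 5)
Your overall architecture is the same as the paper's: model the situation with an actual and a counterfactual world, write the prior estimate as a mixture over the two worlds of the world-conditioned fraction of classifiers voting for the true label, observe that this fraction is largest in the actual world, and get strict underestimation whenever positive weight falls on the counterfactual world. Your bookkeeping differs only cosmetically: you condition on each classifier's own vote and write $\textsf{Prior}=q\beta_1+(1-q)\beta_0$ (closer to the original BTS formulation), whereas the paper works directly with $P(v_{o^*}\mid s_j)=P(v_{o^*}\mid w_{o^*})P(w_{o^*}\mid s_j)+P(v_{o^*}\mid w_k)P(w_k\mid s_j)$ and bounds it by $P(v_{o^*}\mid w_{o^*})$, with strictness unless $P(w_{o^*}\mid s_j)=1$. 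In fact your way of tying strictness to the theorem's hypothesis (at least one wrong vote forces $q<1$, so the counterfactual term carries positive weight) is a somewhat cleaner match to the stated condition than the paper's ``some feature vectors will lead to strict inequality.''

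The genuine gap is the step you yourself flag as point (ii): the monotonicity $P(v_{o^*}\mid w_{o^*})>P(v_{o^*}\mid w_k)$, $k\neq o^*$ (equivalently your ``the fraction predicting $1$ is strictly smaller in $w_{i0}$''), which you import as an ``informativeness assumption.'' In the paper this is not an assumption but roughly half of the proof: it is derived from the threshold-based world classification rule $W(s_k)$, which gives $P(w_{o^*}\mid v_{o^*})>c_{o^*}>P(w_{o^*}\mid v_k)$, hence $P(w_{o^*}\mid v_{o^*})>P(w_{o^*})$, and then the Bayes computation $\frac{P(v_{o^*}\mid w_{o^*})}{P(v_{o^*}\mid w_k)}=\frac{P(w_{o^*}\mid v_{o^*})}{1-P(w_{o^*}\mid v_{o^*})}\cdot\frac{1-P(w_{o^*})}{P(w_{o^*})}>1$. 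Without this derivation (or an explicit hypothesis added to the theorem statement), your inequalities $\beta_1\le q$ and $\beta_0<q$ are postulated rather than established, so the argument as written is incomplete exactly where the paper does its real work; everything downstream of that lemma in your write-up is correct and matches the paper.
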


Theorem 1 indicates that exclusively posterior probabilities based methods such as majority voting can not infer the true answer for all the time. Theorem 2 shows that the minority should be the final answer instead of the majority if the prior (estimated prediction) is less than the posterior. In Theorem 2, the posterior and prior are the prediction distribution of other classifiers for each classifier - both are provided by our proposed MTS algorithms.

\paragraph{Complexity} In addition, our proposed approaches do not add more computing complexity. For HMTS, for example in our experiments, another $15\cdot (L+1)$ (label classes $\{0,1,...,L\}$) simple regressors will be trained to predict others' beliefs based on 15 baseline classifiers. So the total training time is linear in the number of label classes. After training the extra regressors, running the algorithm only requires taking $L+1$ averages (15 of the $15\cdot (L+1)$ regressors each) and compare with average posterior. DMTS will only need to train one additional classifier based on 15 classifiers and both the training and the running time are almost the same as the basic majority voting algorithm. The above complexity analysis shows our methods are very practical.

\section{Experiments}

In this section, we present our experimental results. Particularly we
test our proposed two MTS algorithms on 6 binary and 6 multiclass real-world classification datasets. Experimental results show that consistently better classification accuracy can be obtained compared to always trusting the majority voting outcomes.

\subsection{Datasets}

\begin{table}[!t]

\begin{center}
\caption{\label{tab:statofdateset} Statistics of 6 binary and 6 multiclass classification datasets}
\resizebox{\textwidth}{!}{
\begin{tabular}{|c|c|c|c|c|c|c|}
\hline
Data set                     & Breast cancer & Movie Review & German & Australian & Hill Valley & Spambase \\ \hline\hline
\# of Inst.          & 569           & 1000         & 1000   & 690        & 606         & 4601     \\ \hline
\# of Attr.        & 30            & 77           & 24     & 14         & 100         & 57       \\ \hline
\% of Maj. & 62.7\%        & 50\%         & 70.0\% & 67.8\%     & 50.7\%      & 60.6\%   \\ \hline
\end{tabular}
}

\vspace{7pt}
\resizebox{\textwidth}{!}{
\begin{tabular}{|c|c|c|c|c|c|c|}
\hline
Data set                     & Abalone & Waveform & Wall-Following & Stalog landsat & Optimal & Pen-Based \\ \hline\hline
\# of Inst.          & 4177           & 5000         & 5456   & 4435        & 3823         & 7494     \\ \hline
\# of Attr.        & 8            & 21           & 24     & 36         & 62         & 16       \\ \hline
\% of Maj. & 34.6\%        & 33.9\%         & 40.4\% & 24.2\%     & 10.2\%      & 10.4\%   \\ \hline
\end{tabular}
}

\end{center}

\end{table}

In this section, 6 binary and 6 multiclass classification benchmark datasets \cite{pang2004sentimental} are used to conduct the experiments. The statistical information of these datasets are described in Table~\ref{tab:statofdateset}. In this paper, each of the datasets we used has a small size - we chose to focus on the small data regime where the classifiers are likely to make mistakes. This is a better fit to our setting where majority opinion can be wrong with a good chance. For the splitting of training and testing, we used the original setting for the datasets providing training and testing files separately. For other datasets, only one data file is given. For the testing results' statistical significance, more data is distributed to testing dataset and 50/50 is considered as the splitting of training and testing.  

\subsection{Experimental Setup and Results}
\begin{table}[h]
\begin{center}
\caption{\label{tab:binarytable} Experimental results on the 6 binary classification datasets. ``x'' in the ``x out of y'' means the number of increased correct predictions by applying our new algorithms compared to using majority voting. ``y'' is the number of instances at which the classifiers' disagreement is high enough. We have 15 classifiers and ``8:7'' (8 classifiers predicted 1 and 7 predicted 0) has more disagreements than, for example, ``14:1''. ``8:7'', ``7:8'', ``9:6'', and ``6:9'' are considered as ``high disagreement'' here. Improvement percentage (x/y) is also listed. The results having higher improvement percentage for each dataset are highlighted. 
}
\resizebox{\textwidth}{!}{%
\begin{tabular}{|c|c|c|c|c|c|c|}
\hline
Datasets & Breast cancer                                                   & Hill Valley                                                   & Movie Review                                                   & Spambase                                                         & Australian                                                     & German                                                         \\ \hline\hline
HMTS     & \begin{tabular}[c]{@{}c@{}}\textbf{8 out of 51}\\ \textbf{+15.69\%}\end{tabular}   & \begin{tabular}[c]{@{}c@{}}\textbf{5 out of 52}\\ \textbf{+9.80\%}\end{tabular} & \begin{tabular}[c]{@{}c@{}}\textbf{5 out of 37}\\ \textbf{+13.51\%}\end{tabular} & \begin{tabular}[c]{@{}c@{}}48 out of 149\\ +32.21\%\end{tabular} & \begin{tabular}[c]{@{}c@{}}6 out of 43\\ +13.95\%\end{tabular} & \begin{tabular}[c]{@{}c@{}}\textbf{6 out of 45}\\ \textbf{+13.33\%}\end{tabular} \\ \hline
DMTS     & \begin{tabular}[c]{@{}c@{}}4 out of 30\\ + 13.33\%\end{tabular} & \begin{tabular}[c]{@{}c@{}}1 out of 64\\ +1.56\%\end{tabular} & \begin{tabular}[c]{@{}c@{}}5 out of 37\\ +13.51\%\end{tabular} & \begin{tabular}[c]{@{}c@{}}\textbf{70 out of 76}\\ \textbf{+92.11\%}\end{tabular}  & \begin{tabular}[c]{@{}c@{}}\textbf{4 out of 13}\\ \textbf{+30.77\%}\end{tabular} & \begin{tabular}[c]{@{}c@{}}1 out of 38\\ +2.63\%\end{tabular}  \\ \hline
\end{tabular}%
}
\end{center}
\end{table}

In our binary classification experiments, we consider 5 commonly used binary classification algorithms which are Perceptron \cite{rosenblatt1958perceptron}, Logistic Regression (LR) \cite{peng2002introduction}, Random Forest (RF), Support Vector Machine (SVM) \cite{chang2011libsvm}, and MLP. In order to test the usefulness of our methods, we experiment with a noisy environment - we flipped the true class label with three noisy rates to construct three binary classifiers for each of the 5 methods which have mediocre performance on the test datasets. We wanted to diversify our classifiers by introducing different noisy rates (varying the data distribution). Our experiments used 0.06, 0.08, and 0.1 (probability of flipping the label) for each family of classifier. We also tried other values such as 0.1, 0.2, and 0.3, and we reached similar conclusions. In total, 15 different classifiers are obtained as the baseline classifiers.

The experimental results on the 6 binary classification datasets: Breast cancer, Hill Valley, Movie Review, Spambase, Australian, and German are reported in Table~\ref{tab:binarytable}. From these results, we first observe that Heuristics Machine Truth Serum (HMTS) tends to have more stable and robust performances than Discriminative Machine Truth Serum (DMTS). As for the running time, DMTS is faster than HMTS as HMTS needs to compute the peer prediction results of all the 15 classifiers and DMTS only predicts once. Another observation is that the number of correctly modified examples using HMTS is more than the one when using DMTS in more datasets, especially in the small-size datasets. These can be explained by the fact DMTS itself is a MLP classifier which needs a larger size of data to get good results. That HMTS can improve the classification accuracy in the small size of dataset is particularly useful in some fields such as healthcare in which collecting data is very time-consuming and expensive.  

\begin{table}[h]
\begin{center}
\caption{\label{tab:multitable} Experimental results on 6 multi-class classification datasets. We have 15 classifiers and the instance will be considered as having ``high disagreement'' if the vote number of majority class is less or equals to 6 for the 3-class and 4-class datasets. The threshold number is 5 for 6-class and 3 for 10-class datasets. Improvement percentage (x/y) is also listed. The results having higher improvement percentage for each dataset are highlighted. 
}
\resizebox{\textwidth}{!}{%
\begin{tabular}{|c|c|c|c|c|c|c|}
\hline
Datasets    & Abalone                                                  & Waveform                                                   & Wall-Following                                            & Statlog                                                   & Optical                                                   & Pen-Based                                                   \\ \hline
\# of class & 3                                                        & 3                                                          & 4                                                          & 6                                                         & 10                                                        & 10                                                          \\ \hline\hline
HMTS        & \begin{tabular}[c]{@{}c@{}}\textbf{4 out of 61}\\ \textbf{+6.56\%}\end{tabular} & \begin{tabular}[c]{@{}c@{}}11 out of 54\\ +20.37\%\end{tabular} & \begin{tabular}[c]{@{}c@{}} 1 out of 27\\ +3.70\%\end{tabular}   & \begin{tabular}[c]{@{}c@{}}\textbf{8 out of 65}\\ \textbf{+12.30\%}\end{tabular} & \begin{tabular}[c]{@{}c@{}}\textbf{2 out of 15}\\ \textbf{+13.33\%}\end{tabular} & \begin{tabular}[c]{@{}c@{}}17 out of 157\\ +10.83\%\end{tabular} \\ \hline
DMTS        & \begin{tabular}[c]{@{}c@{}}3 out of 48\\ +6.25\%\end{tabular} & \begin{tabular}[c]{@{}c@{}}\textbf{14 out of 45}\\ \textbf{+31.11\%}\end{tabular} & \begin{tabular}[c]{@{}c@{}}\textbf{10 out of 25}\\\textbf{+40.00\%}\end{tabular} & \begin{tabular}[c]{@{}c@{}}1 out of 81\\ +1.20\%\end{tabular}  & \begin{tabular}[c]{@{}c@{}}3 out of 25\\ +12.00\%\end{tabular} & \begin{tabular}[c]{@{}c@{}}\textbf{15 out of 40}\\ \textbf{+37.50\%}\end{tabular}  \\ \hline
\end{tabular}%
}
\end{center}
\end{table}

We also tested our extension to multi-class classification problems. Experimental results on 6 multi-class classification datasets: Abalone, Waveform, Wall-Following (Robot Navigation), Statlog Landsat Satellite, Optical Recognition of Handwritten Digits, and Pen-Based Recognition of Handwritten Digits are reported in Table \ref{tab:multitable}. We reported improvements of using HMTS and DMTS for instances with ``high disagreement'': for instance, for the 3-class and 4-class datasets, the instance will be considered as having ``high disagreement'' if the vote number of majority class is less or equals to 6. We used 15 basic classifiers and the same noisy rates as the ones in the binary case. HMTS and DMTS obtained similarly good performance in the number of correctly modified examples and the percentage of improvements. 

\begin{table}[!h]
\begin{center}
\caption{\label{tab:ensembletable} Comparison between popular ensemble and our proposed approaches}
\resizebox{\textwidth}{!}{%
\begin{tabular}{|c|c|c|c|c|c|}
\hline
Methods                              & Adaboost & Random Forest & \multicolumn{1}{l|}{Weighted Majority} & HMTS             & DMTS             \\ \hline\hline
Breast Cancer &  95.07\%  & 95.07\%       & 95.07\%        & \textbf{96.13\%} & 94.01\%          \\ \hline
German        & 74.00\%  & 75.60\%       & 74.80\%       & \textbf{77.20\%} & 76.20\%          \\ \hline
Waveform     & 84.32\%  & 83.60\%       & 85.16\%       & \textbf{85.48}\%          & \textbf{85.60\%} \\ \hline
\end{tabular}
}
\end{center}


\end{table}

Finally, we compare between several popular ensemble algorithms and our proposed approaches. We list the testing accuracy for Adaboost with 15 decision tree base estimators, Random Forest with 15 decision trees, Weighted Majority, HMTS, and DMTS for three randomly selected datasets: Breast Cancer (2-class), German (2-class), and Waveform (3-class) in Table~\ref{tab:ensembletable}. As shown in the table, HMTS and DMTS outperform Adaboost, Random Forest, and Weighted Majority \cite{germain2015risk}. Compared to other weighted methods, we'd like to note that our aggregation operates on each single task separately - this means that our method will be more robust when the difficulty levels of tasks differ drastically in the dataset. None of the other weighted methods (with fixed and learned weights) has this feature. We also find that our method is robust to a smaller number of classifiers, in contrast to, say Adaboosting.

\section{Discussion and concluding remarks}

This paper proposes \emph{Machine Truth Serum} which aims to aggregate multiple baseline classifiers' predictions even when the majority opinions can be wrong. We are inspired by the idea documented in Bayesian Truth Serum \cite{prelec2004bayesian} that a surprisingly more popular answer should be considered the truth instead of the simple popular one (majority opinion). We design two machine learning aided methods HMTS and DMTS to perform this task. In HMTS, we apply machine learning to predict each classifier's peer prediction percentage and then make a decision about whether selecting the minority answer as the true answer or not by comparing the prior (average of the peer prediction percentage) and posterior. DMTS models the entire decision problem of whether to follow the majority or minority opinion as a binary classification problem. By constructing a new training dataset, we can train an additional classifier to decide on which side of the opinions to follow.
Our experiments over 6 binary and 6 multiclass real-world datasets show that better classification performance can be obtained compared to always trusting the majority voting. Our proposed methods also outperform popular ensemble algorithms on three randomly selected datasets and can be generically applied as a subroutine in ensemble methods to replace majority voting. For future work, we plan to try more types of classifiers, especially the recent deep learning models, to train the belief models for baseline classifiers and apply our methods to more real-world datasets.

%
%
\bibliographystyle{splncs04}
\bibliography{mybibliography}
\appendix
\section{Appendix}
\subsection{Proof of Theorems in Section 4.4} \label{sec:appendix_a1}
In this part, we provided the detailed proof of two theorems which are the analytical evidences for the correctness of our proposed approaches. For simplicity, we only show the proof details of binary classification. The proof of multiclass classification is similar to the binary case. This proof is largely adapted from \cite{prelec2017solution}. Nonetheless we reproduce the details for completeness.

\begin{mytheorem}
No algorithm exists for inferring the correct classification answer relying exclusively on feature vector distribution of true class label, $\displaystyle P(s_{k}|y_{o^{*}}),k=1,...,m$ and correctly computed posterior distribution over all possible classification labels given feature vectors, $\displaystyle P(y_{o}|s_{k}),k=1,...,m,o=0,1$ for any $x_{i}$. $o^{*}$ is the true class label.
\end{mytheorem}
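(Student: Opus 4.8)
The plan is to prove the impossibility by an indistinguishability (two--worlds) argument, in the spirit of the lower bound in \cite{prelec2017solution}. Suppose toward a contradiction that some algorithm $\mathcal A$ always outputs the true class label $o^{*}$ when given only the family of numbers $\{P(s_{k}\mid y_{o^{*}})\}_{k=1}^{m}$ and $\{P(y_{o}\mid s_{k})\}_{k=1,\dots,m;\,o=0,1}$. I would then exhibit two population models (``states of the world'') $A$ and $B$ that induce \emph{numerically identical} values for all of these quantities, yet in which the true label of some feature code differs. Feeding $\mathcal A$ the same inputs in both worlds forces it to return the same answer, so it must be wrong in at least one of them --- contradiction.

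The construction proceeds as follows. First fix world $A$ by choosing arbitrary class--conditional feature distributions $P(s_{k}\mid y_{0})=a_{k}$, $P(s_{k}\mid y_{1})=b_{k}$ and a class prior $P(y_{1})=p\in(0,1)$; its posteriors $P(y_{o}\mid s_{k})$ are then determined by Bayes' rule. Next I look for world $B$ in which, for the data point of interest (having code $s_{k}$), the true label is $0$ rather than $1$. In $B$ I keep the conditional of the \emph{true} label equal to world $A$'s conditional of its true label, i.e.\ $P(s_{k}\mid y_{0})^{(B)}=b_{k}$, leave the conditional of the wrong label $P(s_{k}\mid y_{1})^{(B)}=c_{k}$ free, and leave the class prior $P(y_{1})^{(B)}=p'$ free. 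Imposing that every Bayes posterior in $B$ match the corresponding posterior in $A$ reduces, code by code, to one linear equation that solves for $c_{k}$ (of the form $c_{k}=b_{k}^{2}\,p(1-p')\,/\,(p'a_{k}(1-p))$), and requiring $\sum_{k}c_{k}=1$ becomes a single equation pinning down $p'$; for generic $A$ this yields $p'\in(0,1)$ and hence a bona fide probability model. Since all posteriors are strictly interior, both labels occur under code $s_{k}$ in each world, so a data point with true label $1$ in $A$ (resp.\ $0$ in $B$) really exists. Finally one checks the inputs coincide: in $A$ the true class is $1$ so $\mathcal A$ sees $\{P(s_{k}\mid y_{1})^{(A)}\}=\{b_{k}\}$, in $B$ the true class is $0$ so it sees $\{P(s_{k}\mid y_{0})^{(B)}\}=\{b_{k}\}$, and by construction the posteriors agree.

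A key conceptual point worth stressing in the write--up is \emph{why} knowing the posteriors does not already determine the answer: the theorem grants only the conditional $P(s_{k}\mid y_{o^{*}})$ for the \emph{true} class, not for both classes. If both class--conditionals were available, the likelihood ratio would pin down the Bayes--optimal label for most instances; it is precisely the unobserved conditional of features given the wrong label that provides the degree of freedom $c_{k}$ exploited above.

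The main obstacle is making world $B$ a genuine probability model simultaneously with a valid class prior: the normalization $\sum_{k}c_{k}=1$ and the constraint $p'\in(0,1)$ must be compatible, which requires a mild nondegeneracy choice for world $A$ (for instance the $b_{k}$ not all proportional to the $a_{k}$, equivalently $\sum_{k}b_{k}^{2}/a_{k}\neq 1$). A secondary technical care is that the two worlds must induce literally the same \emph{numbers} handed to $\mathcal A$, not merely ``informationally equivalent'' data --- this is why the argument matches both the true--class conditional vector and the full posterior vector entry by entry, rather than invoking a vaguer notion of sufficiency.
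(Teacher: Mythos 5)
Your proposal is correct and takes essentially the same approach as the paper: an indistinguishability argument that constructs an alternative world model in which the other label is the ground truth, the feature distribution given that (new) true label equals the given $P(s_{k}\mid y_{o^{*}})$, and all posteriors $P(y_{o}\mid s_{k})$ are matched, so the algorithm's inputs cannot identify the true label. The paper realizes this by directly choosing the unconstrained feature marginal $Q(s_{k})\propto P(s_{k}\mid y_{o^{*}})/P(y_{o}\mid s_{k})$ and reading off $Q(s_{k}\mid y_{o})=P(s_{k}\mid y_{o^{*}})$, whereas you solve for the wrong-class conditional $c_{k}$ and the prior $p'$ --- the same construction in different coordinates (and your nondegeneracy proviso is not actually needed, since $p'=Sp/(1-p+Sp)\in(0,1)$ for any $S=\sum_{k}b_{k}^{2}/a_{k}>0$).
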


\begin{proof}

In this proof, for any arbitrarily selected class label, we can construct a world model in which this selected class label is predicted as the answer and it is also the ground truth class label. And this world model can also generate feature vector distribution of true class label and correctly computed posterior distribution over all possible classification labels given feature vectors. 

Based on the description of theorem, $\displaystyle P(s_{k}|y_{o^{*}}),k=1,...,m$ and $\displaystyle P(y_{o}|s_{k}),k=1,...,m,o=0,1$ are known. But we don't know which class label is the correct answer $y_{o^{*}}$. We can arbitrarily selected any class label $y_{o}$ as the ground truth class label. In the following part, a corresponding world model $\displaystyle Q(s_{k}, y_{o})$ which can generate the known $\displaystyle P(s_{k}|y_{o^{*}})$ and $\displaystyle P(y_{o}|s_{k})$ will be constructed.

Because the known parts don't constrain the prior over the feature vector - these priors can model differences in the baseline classifiers. In particular, we can set the prior to: 
\begin{eqnarray}
\nonumber
\displaystyle Q(s_{k})=\frac{\displaystyle P(s_{k}|y_{o^{*}})}{\displaystyle P(y_{o}|s_{k})}\biggl (\sum_{r}\frac{\displaystyle P(s_{r}|y_{o^{*}})}{\displaystyle P(y_{o}|s_{r})}\biggr)^{-1},k=1,...,m
\end{eqnarray}
Because posteriors in the constructed world model must equal to known posteriors: $\displaystyle Q(y_{o}|s_{k})=\displaystyle P(y_{o}|s_{k})$, for $k=1,...,m,o=0,1$. So we can get the joint distribution of answer $y_{o}$ and the feature vector $s_{k}$ in the constructed world model:
\begin{eqnarray}
\nonumber
\displaystyle Q(y_{o}, s_{k})=\displaystyle Q(y_{o}|s_{k})\displaystyle Q(s_{k})=\displaystyle P(s_{k}|y_{o^{*}})\biggl(\sum_{r}\frac{\displaystyle P(s_{r}|y_{o^{*}})}{\displaystyle P(y_{o}|s_{r})}\biggr)^{-1}
\end{eqnarray}
Then we can get the marginal distribution $y_{o}$ in the constructed world by summing over k:
\begin{eqnarray}
\nonumber
\displaystyle Q(y_{o})=\sum_{k}\displaystyle P(s_{k}|y_{o^{*}})\biggl(\sum_{r}\frac{\displaystyle P(s_{r}|y_{o^{*}})}{\displaystyle P(y_{o}|s_{r})}\biggr)^{-1}=\biggl (\sum_{r}\frac{\displaystyle P(s_{r}|y_{o^{*}})}{\displaystyle P(y_{o}|s_{r})}\biggr)^{-1}
\end{eqnarray}

After getting the marginal distributions $\displaystyle Q (s_{k}), \displaystyle Q (y_{o})$, and the matching posteriors, $\displaystyle Q(y_{o}|s_{k})=\displaystyle P(y_{o}|s_{k}),$ for $k=1,...,m,$ the feature vector distribution of true class label in the constructed world, $\displaystyle Q(s_{k}|y_{o})$ can be calculated by:
\begin{eqnarray}
\nonumber
\displaystyle Q(s_{k}|y_{o})=\frac{\displaystyle Q(y_{o}|s_{k})\displaystyle Q(s_{k})}{\displaystyle Q(y_{o})}=\displaystyle P(s_{k}|y_{o^{*}})
\end{eqnarray}
Because $y_{o}$ was arbitrarily chosen, this theorem is proved.
\end{proof}

Theorem 1 shows that any algorithm relying exclusively on feature vector distribution of true class label and correctly computed posterior distribution over all possible classification labels given feature vectors (e.g. majority voting) can not deduct the correct classification answer.

In the following part, we are considering the extra information which is the estimation of other classifiers' prediction results. We use $\displaystyle P(v_{o}|s_{k})$ to represent the how many percentage of classifiers will predict $y_{o}$ given $s_{k}$. We also define world classification function $\displaystyle W(s_{k})=\displaystyle P(w_{o}|s_{k})$. Two thresholds $c_{0}$ and $c_{1}=1-c_{0}$ are given to make the final classification result. The classification rule is as follows:

\begin{eqnarray}  
\nonumber
\label{eq:worldequation}
\displaystyle W(s_{k}) = \left\{  
\begin{array}{{llll}} w_{0}~~~\text{if~$\displaystyle P(w_{0}|s_{k})> c_{0}$};\\
    w_{1}~~~\text{if~$\displaystyle P(w_{1}|s_{k}) > c_{1}$}. 
\end{array}
\right.  
\end{eqnarray} 

\begin{mytheorem}
For any $x_{i}$, the average estimate of the prior prediction for the correct classification answer will be underestimated if not every classifier provides the correct classification prediction .
\end{mytheorem}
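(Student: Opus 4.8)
The plan is to transcribe the Bayesian Truth Serum ``surprisingly popular'' argument of \cite{prelec2017solution} into the classifier/world notation above, working in the infinite-population limit. For a world $w_{o'}$ ($o'\in\{0,1\}$) and a label $o$, write $V_{o}(w_{o'}):=\sum_{k}P(s_{k}\mid y_{o'})\,P(v_{o}\mid s_{k})$ for the limiting fraction of classifiers that output $o$ when the true world is $w_{o'}$; in particular, when $w_{o^{*}}$ is actual, $\textsf{Posterior}(x_{i},l=o^{*})=V_{o^{*}}(w_{o^{*}})$. A classifier that has processed signal $s_{k}$ holds the belief $P(w_{o'}\mid s_{k})$ over worlds and, since peers' outputs are conditionally independent given the world, predicts the peer fraction outputting $o$ to be $\sum_{o'}P(w_{o'}\mid s_{k})V_{o}(w_{o'})$. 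First I would average this over the classifier population and check the cancellation between the agree-estimates of the $o^{*}$-voters (reported directly, the $g_{j,1}$-type terms) and of the $(1-o^{*})$-voters (reported as one minus their own agree-estimate, the $1-g_{j,0}$-type terms), which yields
\[
\textsf{Prior}(x_{i},l=o^{*})=\sum_{o'}q_{o'}\,V_{o^{*}}(w_{o'}),\qquad q_{o'}:=\sum_{k}P(s_{k}\mid y_{o^{*}})\,P(w_{o'}\mid s_{k}),
\]
a convex combination ($q_{o'}\ge 0$, $\sum_{o'}q_{o'}=1$) of the in-world frequencies of the true label.

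Subtracting, and using $1-q_{o^{*}}=q_{1-o^{*}}$, reduces the theorem to showing that
\[
\textsf{Posterior}(x_{i},l=o^{*})-\textsf{Prior}(x_{i},l=o^{*})=q_{1-o^{*}}\bigl(V_{o^{*}}(w_{o^{*}})-V_{o^{*}}(w_{1-o^{*}})\bigr)
\]
is positive. The scalar $q_{1-o^{*}}$ is $\ge 0$, and it is $>0$ under the hypothesis: if some classifier outputs the wrong label on a signal $s_{k}$ of positive true-world likelihood, then a likelihood-ratio threshold rule can only do so if $s_{k}$ also has positive likelihood under $w_{1-o^{*}}$ (otherwise the ratio is infinite and no such rule votes against $o^{*}$ on it), so $P(w_{1-o^{*}}\mid s_{k})>0$ and hence $q_{1-o^{*}}>0$.

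The main work, and the step I expect to be the obstacle, is the second factor: the true label is at least as frequent in the true world as in the counterfactual one. Here I would use that each baseline classifier is a likelihood-ratio threshold rule — which is exactly what the threshold world-classifier $W$ with cutoffs $c_{0},c_{1}$ encodes, with the spread of thresholds across classifiers modelling their differing priors — so that $s_{k}\mapsto P(v_{o^{*}}\mid s_{k})$ is non-decreasing in the ratio $P(s_{k}\mid y_{o^{*}})/P(s_{k}\mid y_{1-o^{*}})$. Setting $d_{k}:=P(s_{k}\mid y_{o^{*}})-P(s_{k}\mid y_{1-o^{*}})$ (so $\sum_{k}d_{k}=0$), the key lemma is that $\sum_{k\in S}d_{k}\ge 0$ for every ``upper set'' $S$ in the likelihood-ratio order, which follows from a short case split on whether the defining cutoff is $\ge 1$ (then $d_{k}\ge 0$ on $S$ termwise) or $<1$ (then $d_{k}\le 0$ off $S$, and $\sum_{k\in S}d_{k}=-\sum_{k\notin S}d_{k}$). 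Applying the lemma to the super-level sets $S_{t}=\{k:P(v_{o^{*}}\mid s_{k})>t\}$ and integrating over $t\in[0,1]$ (layer-cake) gives $V_{o^{*}}(w_{o^{*}})-V_{o^{*}}(w_{1-o^{*}})=\int_{0}^{1}\bigl(\sum_{k\in S_{t}}d_{k}\bigr)\,dt\ge 0$, with strict inequality once the vote share genuinely varies with the signal — which the hypothesis ``not every classifier is correct'' guarantees (given that at least one is correct; the all-wrong case is the boundary where both the posterior and the prior of $o^{*}$ vanish). Combining the two factors with the first display yields $\textsf{Prior}(x_{i},l=o^{*})<\textsf{Posterior}(x_{i},l=o^{*})$, i.e.\ the averaged prior prediction for the correct label is underestimated; the multiclass case follows by pitting the true world against each counterfactual world separately. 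The delicate parts are the bookkeeping in the cancellation that produces the convex-combination identity and the strictness analysis of the upper-set lemma under the exact hypothesis.
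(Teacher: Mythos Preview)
Your proposal is correct and shares the paper's overall skeleton: write each classifier's peer-prediction estimate as a mixture over worlds, average to get the prior as a convex combination $\sum_{o'}q_{o'}V_{o^{*}}(w_{o'})$ of the in-world vote shares, and then reduce everything to the single inequality $V_{o^{*}}(w_{o^{*}})>V_{o^{*}}(w_{1-o^{*}})$. The paper does exactly this, writing $P(v_{o^{*}}\mid s_{j})=\sum_{o'}P(v_{o^{*}}\mid w_{o'})P(w_{o'}\mid s_{j})$ and bounding termwise before averaging; your explicit $q_{o'}$ bookkeeping and the paper's per-signal bound are the same computation.

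Where you genuinely diverge is in the proof of the key inequality. You treat $s\mapsto P(v_{o^{*}}\mid s)$ as an arbitrary function monotone in the likelihood ratio, prove an upper-set lemma $\sum_{k\in S}d_{k}\ge 0$, and integrate over super-level sets via layer-cake. The paper instead exploits that the vote is governed by a \emph{single} threshold $c_{o^{*}}$: from $P(w_{o^{*}}\mid v_{o^{*}})>c_{o^{*}}>P(w_{o^{*}}\mid v_{k})$ it gets $P(w_{o^{*}}\mid v_{o^{*}})>P(w_{o^{*}})$ in one line, and then a Bayes swap gives $P(v_{o^{*}}\mid w_{o^{*}})/P(v_{o^{*}}\mid w_{k})>1$ directly. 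Your argument is longer but buys real generality --- it covers a population of classifiers with a spread of thresholds (so $P(v_{o^{*}}\mid s_{k})$ can take intermediate values in $[0,1]$), whereas the paper's two-line Bayes trick is tied to the fixed-$c_{o^{*}}$ model. The strictness discussion is handled loosely in both; the paper simply notes that some signals give strict inequality in the termwise bound, which matches your observation that $q_{1-o^{*}}>0$ under the hypothesis.
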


\begin{proof}
We first prove that the actual percentage of correctly predicted classifiers for the true answer in the actual world exceeds counterfactual world's percentage for the true answer, $\displaystyle P(v_{o^{*}}|w_{o^{*}}) > \displaystyle P(v_{o^{*}}|w_{k}),k \neq o^{*}$. 

By the definition of $\displaystyle W(s_{k})$, we can get $\displaystyle P(w_{o^{*}}|v_{o^{*}}) > c_{o^{*}},\displaystyle P(w_{o^{*}}|v_{k})<c_{o^{*}}$. Then we have $\displaystyle P(w_{o^{*}}|v_{o^{*}})\displaystyle P(v_{k})>\displaystyle P(w_{o^{*}}|v_{k})\displaystyle P(v_{k})$. So \begin{eqnarray}
\label{eq:theoremformula12}
\displaystyle P(w_{o^{*}}|v_{o^{*}})>\displaystyle P(w_{o^{*}}|v_{o^{*}})\displaystyle P(v_{o^{*}})+\displaystyle P(w_{o^{*}}|v_{k})\displaystyle P(v_{k})=\displaystyle P(w_{o^{*}})
\end{eqnarray}
According to Bayesian rule, we have the following deduction:
\begin{eqnarray}
\label{eq:theoremformula13}
\frac{\displaystyle P(v_{o^{*}}|w_{o^{*}})}{\displaystyle P(v_{o^{*}}|w_{k})}=\frac{\displaystyle P(w_{o^{*}}|v_{o^{*}})\displaystyle P(w_{k})}{\displaystyle P(w_{k}|v_{o^{*}})\displaystyle P(w_{o^{*}})}=\frac{\displaystyle P(w_{o^{*}}|v_{o^{*}})}{1-\displaystyle P(w_{o^{*}}|v_{o^{*}})}\frac{1-\displaystyle P(w_{o^{*}})}{\displaystyle P(w_{o^{*}})}
\end{eqnarray}
Based on \eqref{eq:theoremformula12}, \eqref{eq:theoremformula13} is greater than one. So $\displaystyle P(v_{o^{*}}|w_{o^{*}}) > \displaystyle P(v_{o^{*}}|w_{k}),k \neq o^{*}$ is proved.

The estimate of classification prediction given the feature value $s_{j}$ can be computed by marginalizing the actual and counterfactual worlds, $\displaystyle P(v_{o^{*}}|s_{j})=\displaystyle P(v_{o^{*}}|w_{o^{*}})\displaystyle P(w_{o^{*}}|s_{j})+\displaystyle P(v_{o^{*}}|w_{k})\displaystyle P(w_{k}|s_{j})$. And we proved that $\displaystyle P(v_{o^{*}}|w_{o^{*}}) > \displaystyle P(v_{o^{*}}|w_{k}),k \neq o^{*}$. Therefore, $\displaystyle P(v_{o^{*}}|s_{j}) \leq \displaystyle P(v_{o^{*}}|w_{o^{*}})$. It will be the strict inequality unless $\displaystyle P(w_{o^{*}}|s_{j})=1$. Because some feature vectors will lead to strict inequality, the average estimate of the prior prediction will be strictly underestimated. This theorem is proved.

\end{proof}

\end{document}